\documentclass[conference]{IEEEtran}
\usepackage{graphicx}
\usepackage{textcomp}
\usepackage{xcolor}
\usepackage{algorithmicx}
\usepackage{algorithm}
\usepackage{algpseudocode} 
\usepackage{multirow} 
\usepackage{booktabs} 
\usepackage{rotating} 
\usepackage{graphicx} 
\usepackage{array}    
\usepackage{amsthm}
\usepackage{amsmath}
\usepackage{amsfonts} 
\usepackage{amssymb}
\newtheorem{theorem}{Theorem}
\newcommand{\argmin}[1]{\underset{#1}{\operatorname{argmin}}~}
\newcommand{\argmax}[1]{\underset{#1}{\operatorname{argmax}}~}

\begin{document}

\title{CFSSeg: Closed-Form Solution for Class-Incremental Semantic Segmentation of 2D Images and 3D Point Clouds}

\author{
Jiaxu Li$^{1}$, 
Rui Li$^{1,5}$,
Jianyu Qi$^{1}$,
Songning Lai$^{2}$, 
Linpu Lv$^{6}$,\\
Kejia Fan$^{1}$, 
Jianheng Tang$^{4}$, 
Yutao Yue$^{2}$, 
Dongzhan Zhou$^{5}$,
Yuanhuai Liu$^{4}$,
Huiping Zhuang$^{3}$\\
$^1$Central South University, $^2$The Hong Kong University of Science and Technology (Guangzhou), \\
$^3$South China University of Technology, $^4$Peking University,\\ $^5$Shanghai Artificial Intelligence Laboratory,$^6$Zhengzhou University
}
\maketitle
\begin{abstract}
2D images and 3D point clouds are foundational data types for multimedia applications, including real-time video analysis, augmented reality (AR), and 3D scene understanding.  Class-incremental semantic segmentation (CSS) requires incrementally learning new semantic categories while retaining prior knowledge.  Existing methods typically rely on computationally expensive training based on stochastic gradient descent, employing complex regularization or exemplar replay. However, stochastic gradient descent-based approaches inevitably update the model’s weights for past knowledge, leading to catastrophic forgetting, a problem exacerbated by pixel/point-level granularity. To address these challenges, we propose CFSSeg, a novel exemplar-free approach that leverages a closed-form solution, offering a practical and theoretically grounded solution for continual semantic segmentation tasks. This eliminates the need for iterative gradient-based optimization and storage of past data, requiring only a single pass through new samples per step. It not only enhances computational efficiency but also provides a practical solution for dynamic, privacy-sensitive multimedia environments. Extensive experiments on 2D and 3D benchmark datasets such as Pascal VOC2012, S3DIS, and ScanNet demonstrate CFSSeg's superior performance.
\end{abstract}

\begin{IEEEkeywords}
Semantic segmentation, Continual learning
\end{IEEEkeywords}

\section{Introduction}

\begin{figure}[htbp]
  \centering
  \includegraphics[width=1\linewidth]{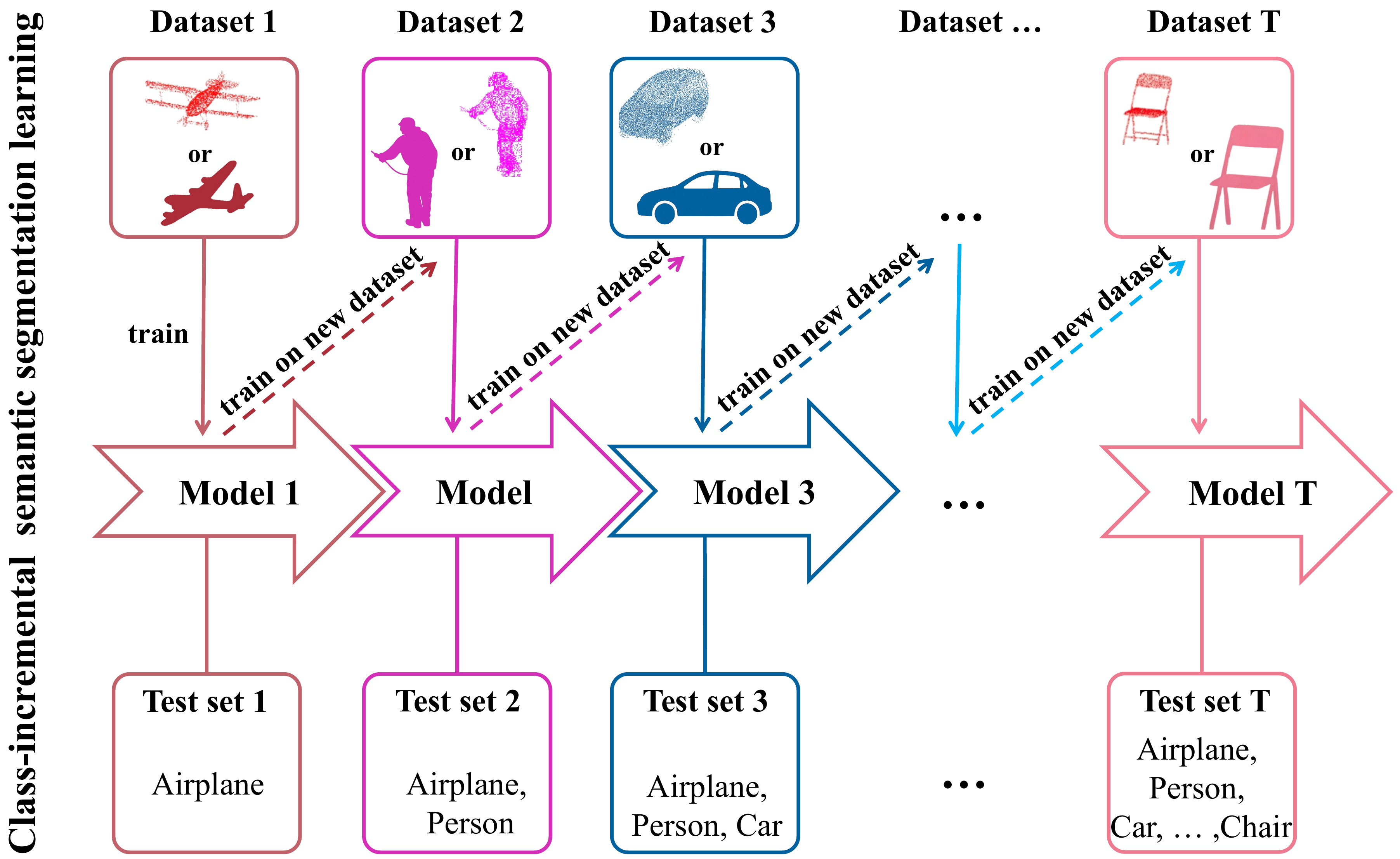}
  \caption{Illustration of the class-incremental semantic segmentation learning process. At each step, the model is incrementally trained on new classes while retaining knowledge of previously learned classes. For example, initially, the model is introduced to the "airplane" class. Subsequently, the model learns additional classes, with each step introducing new categories, progressively learning "person," "car," and "chair," among others, while expanding its knowledge and maintaining understanding of earlier classes.}
  \label{fig:1}
\end{figure}

2D images and 3D point clouds are fundamental data modalities that underpin modern multimedia applications, including real-time video analysis, augmented reality (AR), robotics, and immersive 3D scene understanding. However, real-world multimedia systems rarely use fixed and predefined sets of object categories. They often encounter new objects or concepts after initial deployment, requiring the ability to adapt and expand their knowledge base over time \cite{souza2020challenges,CIL_survey}.
A naive approach is to train models directly on newly arrived data, but this strategy is plagued by catastrophic forgetting \cite{CIL_forgetting_1}, where the model forgets previously acquired knowledge while adapting to new information. To address this issue, continual learning methods have been proposed to mitigate the effects of forgetting while allowing models to gradually adapt to new data \cite{wang2024comprehensive}.

Class-incremental semantic segmentation (CSS) \cite{ILT, MiB, douillard2021plop, SDR} presents unique challenges. As shown in Figure \ref{fig:1}, the model needs to continuously learn to segment new categories. Compared to image classification tasks, semantic segmentation tasks involve pixel/point-level granularity, which typically requires substantial computational resources and makes models more susceptible to catastrophic forgetting \cite{CSSsurvey}.  A challenge in CSS is the stability-plasticity dilemma \cite{CIL_survey}, which involves balancing two conflicting goals: stability and plasticity. Stability refers to the model’s ability to retain knowledge from past tasks, while plasticity requires the model to adapt to new incoming data. Striking the right balance is crucial for successful continual learning.

Recent advancements in CSS can be broadly categorized into exemplar-free and exemplar-based approaches. Exemplar-free methods aim to perform class-incremental learning without relying on historical data or features to reduce knowledge forgetting. These methods often employ self-supervised learning \cite{sats}, regularization techniques \cite{ILT, MiB}, or dynamic network architectures \cite{der1, der2}. On the other hand, exemplar-based methods depend on strategies such as sample replay, feature replay, auxiliary dataset integration \cite{zhu2023continual}, or pseudo-data generated by generative models \cite{yu2024tikp}. While these methods show promise in retaining knowledge, they are all based on gradient descent and inevitably erase past knowledge through gradient updates \cite{goodfellow2015empiricalinvestigationcatastrophicforgetting, GKEAL_Zhuang_CVPR2023}. Moreover, they often demand significant computational resources, and some exemplar-based methods may not be suitable in scenarios where data privacy is paramount.

Analytic learning, as an alternative to stochastic gradient descent methods, overcomes key challenges associated with backpropagation, including gradient vanishing and the instability of iterative training processes, by directly computing neural network parameters \cite{AL_1,AL_2,AL_3,AL_4,peng2025tsvd,tran2025boosting}. Inspired by this, we have proposed CFSSeg, a closed-form solution for CSS. Unlike existing incremental learning methods based on stochastic gradient descent, which require multiple training epochs, our approach needs only a single training epoch. Specifically, we freeze the encoder and update the model using a closed-form solution to achieve stability, while mapping features to a higher-dimensional space to make them more linearly separable, thereby enhancing plasticity. At the same time, it is efficient and privacy-preserving, making it suitable for practical applications. Additionally, in disjoint and overlapped settings, semantic drift can occur \cite{CSSsurvey,MiB,douillard2021plop}, where previously learned categories collapse into background class labels in new datasets. We introduce a pseudo-labeling strategy that leverages uncertainty \cite{douillard2021plop,Yang3D} to mitigate semantic drift. The overview of our method is shown in Figure \ref{fig:2}. Extensive experiments on 2D and 3D benchmark datasets such as Pascal VOC2012 \cite{Everingham10}, S3DIS \cite{s3dis}, and ScanNet \cite{scannet}  have demonstrated its superior performance.

The key contributions are summarized as follows:
\begin{itemize}
\item We propose a novel, gradient-free, closed-form solution for exemplar-free continual semantic segmentation in both 2D images and 3D point clouds.
\item We develop a recursive update mechanism for the classification head, enabling efficient single-pass incremental learning without storing past data. 
\item Through extensive experiments on Pascal VOC2012, S3DIS, and ScanNet, we demonstrate that our method achieves advanced results  while offering significant advantages in computational efficiency and data privacy.
\end{itemize}

\section{Related Work}

\noindent\textbf{Semantic Segmentation.}
Semantic segmentation, a dense prediction task, involves assigning a semantic label to every pixel in an image. In recent years, significant progress has been made in this domain, primarily driven by the development of convolutional neural network (CNN)-based models \cite{deeplabv1,deeplabv2,deeplabv3+,DeepLabv3_Chen_arXiv2017}. More recently, Transformer-based architectures \cite{SegFormer,maskformer,Mask2Former} and innovative Mamba frameworks \cite{u-mamba,Swin-umamba,RS3Mamba} have gained prominence, introducing novel methodologies and perspectives for addressing the challenges of semantic segmentation. DeeplabV3 \cite{DeepLabv3_Chen_arXiv2017} has been widely used in previous CSS work, and we selected it as our 2D segmentation model.

For the 3D point cloud modality, key methods include PointNet \cite{Qi_2017_CVPR} and its derivative architecture PointNet++ \cite{NIPS2017_d8bf84be}, which are used to directly process point cloud data; Transformer models such as Point Transformer V3 \cite{Wu_2024_CVPR}, which improve performance by capturing long-range dependencies; and DGCNN \cite{DGCNN}, which is based on the EdgeConv module, captures local neighborhood information, and learns global shape properties by stacking multiple layers. This dynamic graph approach makes it particularly suitable for handling the unstructured nature of point clouds. In this paper, we adopt DGCNN  as our 3D segmentation model due to its simplicity and effectiveness.

\noindent\textbf{2D Class-Incremental Semantic Segmentation.}
Class-incremental semantic segmentation, initially proposed in medical imaging applications \cite{medical1,medical2}, has since been extended to natural image datasets \cite{ILT,MiB}. Unlike standard classification tasks, CSS poses unique challenges due to its pixel-level granularity, which exacerbates the issue of catastrophic forgetting \cite{CSSsurvey}. CSS methods are broadly categorized into exemplar-free and exemplar-based approaches. Exemplar-free methods often leverage strategies such as self-supervised learning \cite{SDR,UCD,sats}, regularization techniques \cite{ILT,MiB,douillard2021plop}, or dynamic network architectures \cite{der1,der2,der3} to retain knowledge from previously seen data. On the other hand, exemplar-based methods employ mechanisms such as sample replay, feature replay, and auxiliary dataset integration \cite{SSUL,zhu2023continual,wang2023rethinking,kalb2022improving}, or utilize pseudo-data or pseudo-features generated by generative models \cite{maracani2021recall,yu2024tikp,liu2022new,shan2022class}, combining these with new data to enable continual training.

\noindent\textbf{3D Class-Incremental Semantic Segmentation.}
There is limited work on continual learning for 3D semantic segmentation, and it has only recently begun to be explored.  
Yang et al. \cite{Yang3D} proposed a class-incremental learning method combining geometric features and uncertainty estimation.  
LGKD \cite{LGKD} introduced a label-guided knowledge distillation loss.  
Chen et al. \cite{chen2025class} investigated class-incremental learning for mobile LiDAR point clouds, proposing strategies for feature representation preservation and loss cross-coupling.

\section{Background}
To begin, we define the objective of the semantic segmentation task. The input space is represented as $\mathcal{X} \in \mathbb{R}^{N \times C_{\text{in}}}$, where $N$ denotes the number of input elements (pixels or point clouds), and $C_{\text{in}}$ represents the number of channels per element (e.g., RGB for images, or RGB, XYZ, normals for point clouds). The output label space is $\mathcal{Y} \in \mathcal{C}^{N}$, where the set of classes is $\mathcal{C}$, including the background class $c_b \in \mathcal{C}$. Given the training dataset $\mathcal{T} = \mathcal{X} \times \mathcal{Y}$, the goal is to learn a mapping function $q$ parameterized by $\theta$ that predicts a per-element class probability distribution: $ q_\theta: \mathcal{X} \to \mathbb{R}^{N \times |\mathcal{C}|}.$ The segmentation mask is then computed as: $\hat{y} = \argmax{c \in \mathcal{C}} q_\theta (x)[i, c],$
where $q_\theta(x)[i, c]$ indicates the probability of element $i$ belonging to class $c$.

In the traditional supervised learning paradigm, the entire training set $\mathcal{T}$ is provided at once, and the model is trained in a single step. However, in continual learning, training is performed iteratively, with each step introducing new categories along with their corresponding subset of training data. This process spans multiple steps, denoted as \{step 1, step 2, $\cdots$, step $T$\}. In step $t$, the label set $\mathcal{C}_{t-1}$ is expanded by adding a new set of categories $\mathcal{S}_t$, resulting in an updated label set $\mathcal{C}_t = \mathcal{C}_{t-1} \cup \mathcal{S}_t$. Simultaneously, a new training subset $\mathcal{T}_t$ is introduced to update the previous model $q_{\theta_{t-1}}$ to $q_{\theta_{t}}$. According to the CSS principle, the newly introduced category sets are mutually exclusive, i.e., $\mathcal{S}_i \cap \mathcal{S}_j = \emptyset$ for $i \neq j$.

Different learning settings are considered for CSS, depending on the availability and labeling of categories during incremental learning. Sequential, disjoint, and overlapped settings are detailed below: \textbf{1) Sequential Setting.} In the sequential setting, labels for both previously learned and newly introduced categories are available simultaneously during each incremental learning step. \textbf{2) Disjoint Setting.} The disjoint setting introduces complexity by labeling previously learned categories as background in the current task. This phenomenon, known as semantic drift, challenges the model to differentiate between real background and previously learned classes. \textbf{3) Overlapped Setting.} The overlapped setting further complicates the learning process. Here, only new categories and the background are labeled, but the background label can encompass true background, previously learned categories, and future categories that have not yet been introduced. 

\section{Method}

\begin{figure*}[htbp]
  \centering
  \includegraphics[width=1\linewidth]{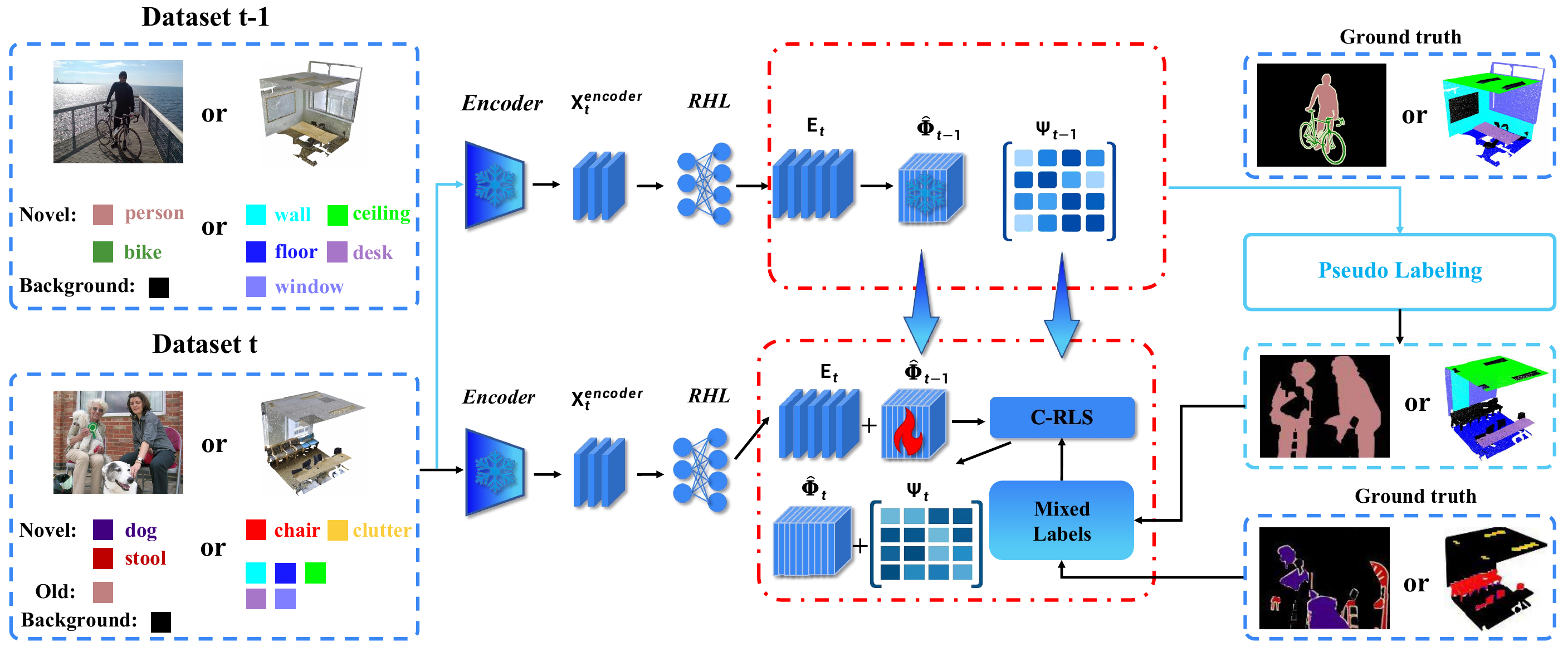}
  \caption{Overview of the proposed method CFSSeg. In step $t$, the model from step $t-1$ is used to generate pseudo labels via Pseudo Labeling, which are then combined with ground truth labels to form mixed labels. The model inherits the classification head $\hat{\mathbf{\Phi}}_{t-1}$ learned in step $t-1$, and combines it with the mixed labels, the extracted features $\mathbf{E}_t$, and $\mathbf{\Psi}_{t-1}$ from step $t-1$. The C-RLS algorithm is then used to update and obtain $\hat{\mathbf{\Phi}}_{t}$ and $\mathbf{\Psi}_{t}$.}
  \label{fig:2}
\end{figure*}

\subsection{Ridge Regression}

In step 1, we use stochastic gradient descent to train an encoder. Notably, a powerful pre-trained encoder (e.g., SAM \cite{kirillov2023segment}) can also be used to avoid this training process. We then save and freeze the encoder, treating it as a feature extractor.

After training to obtain an encoder using the gradient descent method, during the continual learning phase, we do not use the gradient descent method to train the model. This is because the update of gradients will inevitably interfere with the weights of previous tasks \cite{goodfellow2015empiricalinvestigationcatastrophicforgetting,GKEAL_Zhuang_CVPR2023}, leading to forgetting. Therefore, we adopt a simpler ridge regression, which has a closed-form solution. 

Although freezing the backbone resolves the stability issue, it affects the model's plasticity. In order to increase the plasticity, according to Cover's Theorem \cite{RHL}, non-linearly mapping the features to a high-dimensional space can increase the probability of the features being linearly separable, we adopt a simple high-dimensional mapping method. The features extracted from the encoder are passed through a Randomly-initialized Hidden Layer (RHL) followed by a non-linear activation function (ReLU). The RHL is a linear layer whose weights are initialized from a normal distribution. Let $\mathbf{X}_{1}^{\text{encoder}}$ and $\mathbf{Y}_{1}^{\text{train}}$ be the  feature matrix extracted by the encoder and the label matrix in step 1. We denote 
\begin{equation}
\mathbf{E}_1 = \text{ReLU}(\mathbf{X}_1^{\text{encoder}} \mathbf{\Phi}^{\mathrm{E}}),
\label{extrat features}
\end{equation} where $\mathbf{\Phi}^{\mathrm{E}}  \in \mathbb{R}^{d_{\text{encoder}} \times d_{\mathrm{E}}}$, and $d_{\text{encoder}}$ is the number of channels of the features extracted by the encoder. Generally, $d_{\text{encoder}} \ll d_{\mathrm{E}}$.  The features $\mathbf{E}_1 \in \mathbb{R}^{K_1 \times d_\mathrm{E}}$ are employed to predict the label matrix $\mathbf{Y}_1^{\text{train}} \in \mathbb{R}^{K_1 \times C_1}$, where $K_1$ is the input number of elements. The matrix $\mathbf{Y}_1^{\text{train}}$ is a label matrix formed by stacking one-hot labels. This prediction is carried out via the ridge regression approach, which entails solving the following optimization problem:

\begin{equation}
\argmin{\mathbf{\Phi}_{1}} (\| \mathbf{Y}_1^{\text{train}} - \mathbf{E}_1 \mathbf{\Phi}_{1} \|_\text{F}^2 + \gamma \| \mathbf{\Phi}_{1} \|_\text{F}^2),
\end{equation}

\noindent where $\gamma$ is a regularization parameter, $\|\cdot\|_{\text{F}}$ represents the Frobenius norm. The optimal solution to this problem is:
\begin{equation}
    \widehat{\mathbf{\Phi}}_{1} = ( \mathbf{E}_1^{\top} \mathbf{E}_1 + \gamma \mathbf{I} )^{-1} \mathbf{E}_1^{\top} \mathbf{V}_1^{\text{train}},
    \label{eq:solution}
\end{equation}
where $\mathbf{I}$ is the identity matrix, $^{-1}$ represents the matrix inversion operation, and $^\top$ denotes the matrix transpose operation.

\subsection{Recursive Ridge Regression for CSS}
The previous subsection introduced ridge regression learning, which, however, is not suitable for continual learning. Next, we will propose the concatenated recursive least squares (C-RLS) algorithm, the pseudocode of which is shown in Algorithm \ref{alg:cfssg_incremental}. Without loss of generality, let $\mathbf{Y}_{1:t-1}^{\text{train}}$, $\mathbf{Y}_{1:t}^{\text{train}}$ and $\mathbf{E}_{1:t-1}$, $\mathbf{E}_{1:t}$, be the accumulated label and feature matrices in step $t-1$ and $t$, and they are related via
\begin{equation}
	\mathbf{Y}_{1:t}^{\text{train}} = \begin{bmatrix}
		\mathbf Y_{1:t-1}^{\text{train}} & \mathbf{0}\\
		\mathbf {\bar Y}_{t}^{\text{train}} & \mathbf {\tilde{Y}}_{t}^{\text{train}} 
	\end{bmatrix}, \quad
	\mathbf{E}_{1:t} = \begin{bmatrix}
		\mathbf{E}_{1:t-1}\\
		\mathbf{E}_{t}
	\end{bmatrix}.
\end{equation}

The block matrix is due to the covered-uncovered partition
\begin{equation}
	\mathbf{Y}_{t}^{\text{train}} = \begin{bmatrix}
		\mathbf {\bar Y}_{t}^{\text{train}} & \mathbf {\tilde{Y} } _{t}^{\text{train}} 
	\end{bmatrix},
\end{equation}

\noindent where $\mathbf{\bar Y}_{t}^{\text{train}}\in\mathbb{R}^{N_t \times d_{C_{t-1}}}$ is the \textit{projected covered matrix} and ${\mathbf{\tilde{Y}}_{t}}^{\text{train}}\in\mathbb{R}^{N_t \times (d_{C_{t}}-d_{C_{t-1}})}$ is the \textit{projected uncovered matrix}. They correspond to segments displaying the appearance of covered classes and uncovered classes, representing the already learned classes and the yet-to-be-learned classes, respectively. The learning problem can then be formulated as:
\begin{equation}
\argmin{\mathbf{\Phi}_{t-1}} (\| \mathbf{Y}_{1:t-1}^{\text{train}} - \mathbf{E}_{1:t-1} \mathbf{\Phi}_{t-1} \|_\text{F}^2 + \gamma \| \mathbf{\Phi}_{t-1} \|_\text{F}^2),
\end{equation}

\noindent according to Eqn \eqref{eq:solution}, at step $ t-1 $, we have:
\begin{equation}
\hat{\mathbf{\Phi}}_{t-1} = ( \mathbf{E}^{\top}_{1:t-1} \mathbf{E}_{1:t-1} + \gamma \mathbf{I} )^{-1} \mathbf{E}^{\top}_{1:t-1} \mathbf{Y}_{1:t-1}^{\text{train}},
\label{eq:9}
\end{equation}

\noindent where $ \hat{\mathbf{\Phi}}_{t-1} \in \mathbb{R}^{d_{\mathrm{E}} \times \sum_{i=1}^{t-1} d_{S_i}} $, with the column size expanding as $ t $ increases. Let
\begin{equation}
\mathbf{\Psi}_{t-1} = ( \mathbf{E}^{\top}_{1:t-1} \mathbf{E}_{1:t-1} + \gamma \mathbf{I} )^{-1}
\label{eq:10}
\end{equation}
\noindent be the inverted auto-correlation matrix, which captures the correlation information from both current and past samples. Building on this, the goal is to compute $ \hat{\mathbf{\Phi}}_{t} $ using only $ \hat{\mathbf{\Phi}}_{t-1}$, $ \mathbf{\Psi}_{t-1} $, and the current step's data, without involving historical samples. The process is formulated as shown in the following theorem.

\begin{theorem}
\label{theorem:main}
The $\mathbf{\Phi_t}$ weights, recursively obtained by
\begin{equation}
\hat{\mathbf{\Phi}}_{t} = \left[ \hat{\mathbf{\Phi}}_{t-1} - \mathbf{\Psi}_{t} \mathbf{E}^{\top}_t \mathbf{E}_t \hat{\mathbf{\Phi}}_{t-1}+\mathbf{\Psi}_{t} \mathbf{E}^{\top}_t \mathbf{\bar{Y}}^{\text{train}}_t \quad \mathbf{\Psi}_t \mathbf{E}^{\top}_t \mathbf{\tilde{Y}}^{\text{train}}_t \right]
\label{eq:phi}
\end{equation}
\noindent are equivalent to those obtained from Eqn \eqref{eq:9} for step $ t $. The matrix $ \mathbf{\Psi}_t $ can also be recursively updated by
\begin{equation}
    \mathbf{\Psi}_t = ( \mathbf{\Psi}_{t-1}^{-1} + \mathbf{E}^{\top}_t \mathbf{E}_t )^{-1}.
    \label{eq:psi}
\end{equation}
\end{theorem}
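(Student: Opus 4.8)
The plan is to verify both recursions directly from the batch closed-form solution evaluated at step $t$, namely $\hat{\mathbf{\Phi}}_{t} = \mathbf{\Psi}_t \mathbf{E}^{\top}_{1:t} \mathbf{Y}_{1:t}^{\text{train}}$ with $\mathbf{\Psi}_t = (\mathbf{E}^{\top}_{1:t}\mathbf{E}_{1:t} + \gamma\mathbf{I})^{-1}$, by substituting the stated block decompositions of $\mathbf{E}_{1:t}$ and $\mathbf{Y}_{1:t}^{\text{train}}$. I would first handle Eqn~\eqref{eq:psi}, since the $\hat{\mathbf{\Phi}}_t$ recursion depends on it. Expanding the Gram matrix along the row-block partition $\mathbf{E}_{1:t} = [\,\mathbf{E}_{1:t-1}^{\top}\ \ \mathbf{E}_t^{\top}\,]^{\top}$ gives $\mathbf{E}^{\top}_{1:t}\mathbf{E}_{1:t} = \mathbf{E}^{\top}_{1:t-1}\mathbf{E}_{1:t-1} + \mathbf{E}_t^{\top}\mathbf{E}_t$, so adding $\gamma\mathbf{I}$ and recognizing $\mathbf{\Psi}_{t-1}^{-1} = \mathbf{E}^{\top}_{1:t-1}\mathbf{E}_{1:t-1} + \gamma\mathbf{I}$ from Eqn~\eqref{eq:10} immediately yields $\mathbf{\Psi}_t^{-1} = \mathbf{\Psi}_{t-1}^{-1} + \mathbf{E}_t^{\top}\mathbf{E}_t$, which is Eqn~\eqref{eq:psi}.

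For Eqn~\eqref{eq:phi}, I would expand the cross-correlation term using both block structures. Multiplying $\mathbf{E}^{\top}_{1:t} = [\,\mathbf{E}_{1:t-1}^{\top}\ \ \mathbf{E}_t^{\top}\,]$ against the $2\times 2$ block label matrix produces two column blocks: the first is $\mathbf{E}^{\top}_{1:t-1}\mathbf{Y}_{1:t-1}^{\text{train}} + \mathbf{E}_t^{\top}\mathbf{\bar{Y}}_t^{\text{train}}$ and the second is $\mathbf{E}_t^{\top}\mathbf{\tilde{Y}}_t^{\text{train}}$, where the zero off-diagonal block is precisely what kills the historical contribution to the new columns. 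Left-multiplying by $\mathbf{\Psi}_t$ then makes the second block equal to $\mathbf{\Psi}_t\mathbf{E}_t^{\top}\mathbf{\tilde{Y}}_t^{\text{train}}$, matching the right-hand column of Eqn~\eqref{eq:phi} verbatim.

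The crux is the first column block, where the dependence on the entire history $\mathbf{E}^{\top}_{1:t-1}\mathbf{Y}_{1:t-1}^{\text{train}}$ must be eliminated. The key step is to read Eqn~\eqref{eq:9} backwards as $\mathbf{E}^{\top}_{1:t-1}\mathbf{Y}_{1:t-1}^{\text{train}} = \mathbf{\Psi}_{t-1}^{-1}\hat{\mathbf{\Phi}}_{t-1}$, substitute this in, and then replace $\mathbf{\Psi}_{t-1}^{-1}$ using the rearranged recursion $\mathbf{\Psi}_{t-1}^{-1} = \mathbf{\Psi}_t^{-1} - \mathbf{E}_t^{\top}\mathbf{E}_t$ just established. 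This gives $\mathbf{\Psi}_t\mathbf{\Psi}_{t-1}^{-1}\hat{\mathbf{\Phi}}_{t-1} = (\mathbf{I} - \mathbf{\Psi}_t\mathbf{E}_t^{\top}\mathbf{E}_t)\hat{\mathbf{\Phi}}_{t-1}$, so the first block becomes $\hat{\mathbf{\Phi}}_{t-1} - \mathbf{\Psi}_t\mathbf{E}_t^{\top}\mathbf{E}_t\hat{\mathbf{\Phi}}_{t-1} + \mathbf{\Psi}_t\mathbf{E}_t^{\top}\mathbf{\bar{Y}}_t^{\text{train}}$, exactly the left portion of Eqn~\eqref{eq:phi}. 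Concatenating the two blocks completes the argument.

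I expect the only genuine subtlety to be bookkeeping rather than deep mathematics: one must keep the covered/uncovered column partition aligned with the inherited weight matrix so that $\hat{\mathbf{\Phi}}_{t-1}$ (which has no columns for the new classes) slots into the first block while the new classes enter only through $\mathbf{\tilde{Y}}_t^{\text{train}}$ in the second block. No matrix-inversion lemma is needed for the identity as stated, since Eqn~\eqref{eq:psi} retains $\mathbf{\Psi}_{t-1}^{-1}$ explicitly; a Woodbury/Sherman--Morrison step would only be required if one further wished to express $\mathbf{\Psi}_t$ directly in terms of $\mathbf{\Psi}_{t-1}$ so as to avoid the inversion in practice.
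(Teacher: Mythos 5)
Your proposal is correct, and it takes a genuinely different and noticeably more elementary route than the paper's own proof. The two arguments agree on Eqn \eqref{eq:psi}: both expand the Gram matrix over the row blocks of $\mathbf{E}_{1:t}$ and invoke the definition \eqref{eq:10}. They diverge on how the historical cross-correlation term $\mathbf{E}_{1:t-1}^{\top}\mathbf{Y}_{1:t-1}^{\text{train}}$ is eliminated from $\hat{\mathbf{\Phi}}_{t}=\mathbf{\Psi}_{t}\mathbf{E}_{1:t}^{\top}\mathbf{Y}_{1:t}^{\text{train}}$. The paper first converts \eqref{eq:psi} into an explicit update via the Woodbury identity, $\mathbf{\Psi}_{t} = \mathbf{\Psi}_{t-1} - \mathbf{\Psi}_{t-1}\mathbf{E}_{t}^{\top}(\mathbf{I} + \mathbf{E}_{t}\mathbf{\Psi}_{t-1}\mathbf{E}_{t}^{\top})^{-1}\mathbf{E}_{t}\mathbf{\Psi}_{t-1}$, and then proves the auxiliary identity $\mathbf{\Psi}_{t-1}\mathbf{E}_{t}^{\top}(\mathbf{I} + \mathbf{E}_{t}\mathbf{\Psi}_{t-1}\mathbf{E}_{t}^{\top})^{-1} = \mathbf{\Psi}_{t}\mathbf{E}_{t}^{\top}$ (via the matrix $\mathbf{K}_{t}$) to collapse the product $\mathbf{\Psi}_{t}\mathbf{Q}_{t-1}^{\prime}$ into $\hat{\mathbf{\Phi}}_{(t-1)\prime}-\mathbf{\Psi}_{t}\mathbf{E}_{t}^{\top}\mathbf{E}_{t}\hat{\mathbf{\Phi}}_{(t-1)\prime}$. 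You bypass all of this machinery: reading \eqref{eq:9} backwards as $\mathbf{E}_{1:t-1}^{\top}\mathbf{Y}_{1:t-1}^{\text{train}} = \mathbf{\Psi}_{t-1}^{-1}\hat{\mathbf{\Phi}}_{t-1}$ and then using $\mathbf{\Psi}_{t}\mathbf{\Psi}_{t-1}^{-1} = \mathbf{\Psi}_{t}(\mathbf{\Psi}_{t}^{-1} - \mathbf{E}_{t}^{\top}\mathbf{E}_{t}) = \mathbf{I} - \mathbf{\Psi}_{t}\mathbf{E}_{t}^{\top}\mathbf{E}_{t}$ produces the covered-class block in two lines, with no matrix-inversion lemma at all (both inverses exist since $\gamma>0$ makes the regularized Gram matrices positive definite). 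Your block-by-block treatment also sidesteps the paper's zero-padding bookkeeping (the padded matrices $\mathbf{Q}_{t-1}^{\prime}$ and $\hat{\mathbf{\Phi}}_{(t-1)\prime}$), because in your decomposition $\hat{\mathbf{\Phi}}_{t-1}$ only ever multiplies against the covered-class columns, which is exactly the alignment subtlety you flag. What the paper's longer route buys is a practical by-product rather than a logical necessity: its Woodbury form updates $\mathbf{\Psi}_{t}$ from $\mathbf{\Psi}_{t-1}$ using only an $N_{t}\times N_{t}$ inversion and matrix products, never forming $\mathbf{\Psi}_{t-1}^{-1}$ explicitly, which is what one would implement; your closing remark correctly identifies this as an implementation concern separate from the equivalence claim being proved.
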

\begin{proof}
See the supplementary materials.
\end{proof}


\subsection{Theoretical Analysis}
    \noindent\textbf{Privacy Protection.}  Our method ensures data privacy in two ways: first, by eliminating the need to store historical data samples; second, by guaranteeing that historical raw data samples  cannot be recovered from the $\mathbf{\Psi}$ matrix through reverse engineering.

    \noindent\textbf{Computational Complexity.} The computational complexity analysis reveals that the time complexity for each step includes $\mathcal{O}(d_{\mathrm{E}}^3)$ for updating $\mathbf{\Psi}_t$ via matrix inversion, and $\mathcal{O}(d_{\mathrm{E}}^2N_t + d_{\mathrm{E}}N_t^2 + d_{\mathrm{E}}^2C_t)$ for updating $\mathbf{\Phi}_t$ via matrix multiplication. These operations can be efficiently parallelized on GPU. 

    \noindent\textbf{Space Complexity.}The space complexity is $\mathcal{O}(d_{\mathrm{E}}^2 + d_{\mathrm{E}}N_t + d_{\mathrm{E}}C_t)$: $\mathcal{O}(d_{\mathrm{E}}^2)$ is for storing the $\mathbf{\Psi}_t$ matrix, $\mathcal{O}(d_{\mathrm{E}}N_t)$ for storing the feature matrix $\mathbf{E}_t$, $\mathcal{O}(d_{\mathrm{E}}C_t)$ for storing the classifier matrix $\mathbf{\Phi}_t$.

    \label{efficiency}

\subsection{Pseudo-Labeling for 2D Image}
At step $t$, $(x_t, y_t) \in \mathcal{T}_t$, where $x_t^i, y_t^i$ represent the elements and their corresponding ground truth labels, respectively. In both disjoint and overlapped settings, previously learned classes are treated as background in the current task, a phenomenon commonly referred to as semantic drift. To address this issue, we adopt a pseudo-labeling approach. We define the uncertainty of an element as follows:  

\begin{equation}  
U_i = 1-\sigma(\max\limits_{c} (q_{\theta_{t-1}}(i,c)),  
\end{equation}  
where $q_{\theta_{t-1}}(i,c)$ denotes the logit output of the model for element $i$ in class $c$, $\sigma$ represents the Sigmoid activation function, and $\hat{y}^{t-1}_i=\argmax{c}{q_{\theta_{t-1}}(i,c)}$ denotes the predicted label from the model at step $t-1$.  The pseudo-labeling strategy is then defined as:  

\begin{equation}  
\tilde{y}^i_t =   
\begin{cases}   
y^i_t, & y^i_t \in S_t, \\  
{y}_{t}^i, & ( y^i_t = c_b ) \wedge ( U_i > \tau ), \\  
\hat{y}_{t-1}^i, & (y^i_t = c_b) \wedge ( U_i \leq \tau ),\\  
\end{cases}  
\label{2D pseudo-label}
\end{equation}  
where $\tau$ is a threshold that determines whether the pixel labeled as background  should adopt pseudo-label generated by the prior model. This approach mitigates the semantic drift issue in both disjoint and overlapped settings.

\subsection{Pseudo-Labeling for 3D Point Cloud}

For point cloud $ i $, we employ the KNN algorithm to identify its $ K $ nearest neighbors based on the $ xyz $ coordinates  and compute the cosine similarity $ w_k $ between point cloud $ i $ and its $ K $ neighbors using their $ xyz $ coordinates.  We adopt a neighborhood spatial aggregation method based on Monte Carlo dropout (MC-dropout) technique, which achieves efficient estimation of point distribution uncertainty through a single forward propagation. This approach utilizes a spatial dependency sampling mechanism, and its effectiveness has been validated in the literature \cite{9560972,Yang3D}. For uncertainty quantification, we employ the Bayesian Active Learning Disagreement (BALD) criterion \cite{BALD} as the core evaluation function for point cloud spatial sampling. Specifically, given an input point cloud, its uncertainty metric function can be expressed as:  

\begin{equation}
\begin{aligned}
U_i &= - \sum\limits_c \left[ \frac{1}{K} \sum\limits_k q_{t-1}(i, c) \cdot w_k \right] \log \left[ \frac{1}{K} \sum\limits_k q_{t-1}(i, c) \cdot w_k \right] \\
&\quad + \frac{1}{K} \sum\limits_{c,k} \left( q_{t-1}(i, c) \cdot w_k \right) \log \left( q_{t-1}(i, c) \cdot w_k \right).
\end{aligned}
\end{equation}  
And then the pseudo-labeling method is defined as:  

\begin{equation}
\tilde{y}^i_t = 
\begin{cases} 
y_t^i, & y_t^i \in S_t, \\ 
\hat{y}^i_{t-1}, & (y_t^i = c_b) \land (\hat{y}^i_{t-1} \neq c_b) \land (U_i \leq \tau), \\ 
\hat{y}^{i,k'}_{t-1}, & (y^i_t = c_b) \land \left( (\hat{y}^i_{t-1} = c_b) \lor (U_i > \tau) \right), \\ 
c_b, & \text{otherwise},
\end{cases}
\label{3D pseudo-label}
\end{equation}  
where $\hat{y}^{i,k'}_{t-1}$ represents the predicted label of the nearest neighbor of point cloud $i$, with $\hat{y}^{i,k'}_{t-1} \neq c_b$ and $U_{i,k'} \leq \tau$. It is worth noting that we may not be able to find such a point, in which case we mark the point as belonging to the background class.

\begin{table*}[htbp]
    \caption{CSS Quantitative comparison of 3D class-incremental segmentation methods on the S3DIS and ScanNet datasets in $S^{0}$ split. BT stands for training only on the dataset from step 1. The best results achieved by the incremental methods are highlighted in bold.\\}
    \label{tab:experiment:s0}
    \centering
    \setlength{\tabcolsep}{5.5pt} 
    \renewcommand{\arraystretch}{1.05} 
    \scalebox{1}{ 
    \begin{tabular}{ll| c| c| ccc| ccc| ccc}
             \toprule[1pt]
            &\multirow{2}*{Method} &\multirow{2}*{Year}&\multirow{2}*{Model}&
		\multicolumn{3}{c|}{8-1 (6 steps)} & \multicolumn{3}{c|}{10-1 (2 steps)} &         
            \multicolumn{3}{c}{12-1 (1 steps)} \\
		&&&& 0-7 & 8-12 & all & 0-9 & 10-12 & all & 0-11 & 12-1 & all\\

            \midrule 
            \multirow{6}*{S3DIS}
            & BT   & -    & DGCNN     & 49.85 &   -   &   -   & 47.03 &   -   &   -   & 45.37 &   -   &   -   \\
            & FT   & -    & DGCNN     & 5.45  & 4.94  & 5.25  & 16.66 & 11.54 & 15.48 & 28.34 & 34.02 & 28.77 \\
            & EWC \cite{EWC} & PNAS2017 & DGCNN & 30.99 & 4.16 & 20.67 & 39.14 & 14.73 & 33.51 & 37.23 & 15.49 & 35.53 \\
            & Yang et al. \cite{Yang3D} & CVPR2023 & DGCNN & 36.90 & 13.76 & 27.80 & 36.98 & 26.60 & 34.59 & 44.54 & \textbf{38.47} & 44.07 \\
            & Ours &    & DGCNN & \textbf{49.77} & \textbf{28.69} & \textbf{41.66} & \textbf{45.26} & \textbf{34.05} & \textbf{42.67} & \textbf{45.19} & 30.71 & \textbf{44.08} \\
            
            & JT   & -    & DGCNN     & 48.99 & 41.76 & 46.21 & 46.72 & 45.11 & 46.35 & 47.22 & 35.65 & 46.33 \\
            \midrule[1pt] 
            &\multirow{2}*{Method} &\multirow{2}*{Year}&\multirow{2}*{Model}&
		\multicolumn{3}{c|}{15-1 (6 steps)} & \multicolumn{3}{c|}{17-1 (2 steps)} &         
            \multicolumn{3}{c}{19-1 (1 steps)} \\
		&&&& 0-14 & 15-19 & all & 0-16 & 17-19 & all & 0-18 & 19 & all\\

            \midrule 
            \multirow{6}*{ScanNet}
            & BT   & -    & DGCNN     & 37.52 &   -   &   -   & 34.72 &   -   &   -   & 32.64 &   -   &   -   \\
            & FT   & -    & DGCNN     & 4.41  & 3.20  & 4.10  & 2.76  & 4.19  & 2.97  & 11.37  & 8.86 & 11.25  \\
            & EWC \cite{EWC} & PNAS2017 & DGCNN & 12.32 & 2.48 & 9.86 & 11.34 & 1.20 & 9.82 & 17.84 & 11.11  & 17.51 \\
            & Yang et al. \cite{Yang3D} & CVPR2023 & DGCNN & 8.46 & 4.44 & 7.46  & 12.29 & 6.48 & 11.42 & 25.56 & 11.31 & 24.85 \\
            & Ours &     & DGCNN & \textbf{32.56} & \textbf{10.22} & \textbf{26.97} & \textbf{29.59} & \textbf{12.83} & \textbf{27.98} & \textbf{28.54} & \textbf{16.88} & \textbf{27.96} \\
            
            & JT   & -    & DGCNN     & 37.59 & 16.42 & 32.30 & 34.78 & 16.69 & 32.07 & 32.28 & 18.12 & 32.08 \\
            \bottomrule[1pt]
    \end{tabular}
    } 
\end{table*}

\begin{table*}[htbp]
    \centering
    \caption{CSS Quantitative comparison of 3D class-incremental segmentation methods on the S3DIS and ScanNet datasets in $S^{1}$ split. BT stands for training only on the dataset from step 1. The best results achieved by the incremental methods are highlighted in bold.\\}
    \label{tab:experiment:s1}
    \setlength{\tabcolsep}{5.5pt} 
    \renewcommand{\arraystretch}{1.05} 
    \scalebox{1}{ 
    \begin{tabular}{ll| c| c| ccc| ccc| ccc}
             \toprule[1pt]
            &\multirow{2}*{Method} &\multirow{2}*{Year}&\multirow{2}*{Model}&
		\multicolumn{3}{c|}{8-1 (6 steps)} & \multicolumn{3}{c|}{10-1 (2 steps)} &         
            \multicolumn{3}{c}{12-1 (1 steps)} \\
		&&&& 0-7 & 8-12 & all & 0-9 & 10-12 & all & 0-11 & 12-1 & all\\
            \midrule 
            \multirow{6}*{S3DIS}
            & BT   & -    & DGCNN     & 37.61 &   -   &   -   & 42.93 &   -   &   -   & 44.68 &   -   &   -   \\
            & FT   & -    & DGCNN     & 12.69 &  2.57 & 8.80  & 10.11 &  2.33 & 8.31  & 25.05 &  11.75 & 24.04 \\
            & EWC \cite{EWC} & PNAS2017 & DGCNN & 17.50 & 2.59 & 11.77 & 20.54 & 8.86 & 17.84 & 23.20 & 11.40 & 22.38 \\
            & Yang et al. \cite{Yang3D} & CVPR2023 & DGCNN & 41.28 &11.64 & 29.88  & 36.45 & 20.14 & 31.13 & 37.38 & 22.40 & 36.23 \\
            & Ours &     & DGCNN     & \textbf{51.33} & \textbf{30.72} & \textbf{43.40} & \textbf{45.16} & \textbf{35.98} & \textbf{43.04} & \textbf{45.65} & \textbf{27.53} & \textbf{44.25} \\
            
            & JT   & -    & DGCNN     & 39.29 & 57.42 & 46.26 & 42.19 & 59.08 & 46.09 & 46.96 & 38.89 & 46.35 \\
            \midrule[1pt] 
            &\multirow{2}*{Method} &\multirow{2}*{Year}&\multirow{2}*{Model}&
		\multicolumn{3}{c|}{15-1 (6 steps)} & \multicolumn{3}{c|}{17-1 (2 steps)} &         
            \multicolumn{3}{c}{19-1 (1 steps)} \\
		&&&& 0-14 & 15-19 & all & 0-16 & 17-19 & all & 0-18 & 19 & all\\
            \midrule 
            \multirow{6}*{ScanNet}
            & BT   & -    & DGCNN     & 29.01 &   -   &   -   & 30.24 &   -   &   -   & 31.59 &   -   &   -   \\
            & FT   & -    & DGCNN     & 4.20  & 1.61  & 3.55  &  3.63 & 1.00  & 3.24  &  9.04 &  0.22 &  8.60 \\
            & EWC \cite{EWC} & PNAS2017 & DGCNN & 14.93 & \textbf{33.30} & 19.52 &  8.78 & \textbf{31.74} & 12.22 & 12.65 &  3.19 & 12.18 \\
            & Yang et al. \cite{Yang3D} & CVPR2023 & DGCNN & 12.17 & 0.16 & 9.17 & 10.19 & 5.67 & 9.52 & 25.08 & \textbf{15.28} & 24.59 \\
            & Ours &     & DGCNN     & \textbf{28.20} & 16.09 & \textbf{25.18} & \textbf{28.43} & 15.01 & \textbf{26.42} & \textbf{28.71} & 11.84 & \textbf{27.84} \\
            
            & JT   & -    & DGCNN     & 28.93 & 39.06 & 31.46 & 30.12 & 40.33 & 31.65 & 31.29 & 30.03 & 31.23 \\
            \bottomrule[1pt]
    \end{tabular}
    } 
\end{table*}

\begin{table*}[htbp]
	\caption{CSS quantitative comparison on Pascal VOC2012 in mIoU (\%) under \textit{sequential} setting. The results of the comparison method are directly taken from the original work and \cite{CSSsurvey}. For the best results, we use \textbf{bold} formatting.}
       
	\centering
	\scriptsize
	\setlength{\tabcolsep}{4.8mm}{
		{\begin{tabular*}{1.0\textwidth}{ll|c|c|ccc|ccc}
				\toprule[0.5mm]
				&\multirow{2}*{Method} &\multirow{2}*{Year}&\multirow{2}*{Model}&
				\multicolumn{3}{c|}{15-1 (6 steps)} & \multicolumn{3}{c}{15-5 (2 steps)} \\
				&&&& 0-15 & 16-20 & all & 0-15 & 16-20 & all\\
				\midrule

				\multirow{9}*{\begin{sideways}Sequential\end{sideways}} &\emph{FT}&-&DeepLabv3+&49.0&17.8&41.6&62.0&38.1&56.3\\
				&LwF~\cite{LWF} &TPAMI2018&DeepLabv3+&33.7 &13.7 &29.0&68.0 &43.0 &62.1\\
				&LwF-MC~\cite{iCaRL1}&CVPR2017&DeepLabv3+&12.1&1.9&9.7&70.6&19.5&58.4\\
				&ILT~\cite{ILT} &ICCVW2019&DeepLabv3+&49.2&30.3&48.3&71.3&\textbf{47.8}&65.7\\
    			&CIL~\cite{CIL} &ITSC2020&DeepLabv3+&52.4&22.3&45.2&63.8&39.8&58.1\\
                    &MiB~\cite{MiB}&CVPR2020&DeepLabv3+&35.7&11.0&29.8&73.0&44.4&66.1\\
    
                    &SDR~\cite{SDR}&CVPR2021&DeepLabv3+&58.5&10.1&47.0&73.6&46.7&67.2\\

                    &SDR+MiB~\cite{SDR}&CVPR2021&DeepLabv3+&58.1&11.8&47.1&74.6&43.8&67.3\\
                    & Ours & & DeepLabv3 & \textbf{78.1} & \textbf{42.0} & \textbf{70.0} & \textbf{78.1} & 42.0 & \textbf{70.0} \\
        \bottomrule[0.5mm]
		\end{tabular*}}{}}	
	\label{table-VOC2012:1}
\end{table*}

\begin{table*}[htbp]
	\caption{CSS quantitative comparison on Pascal VOC2012 in mIoU (\%) under \textit{disjoint} and \textit{overlapped} settings. The results of the comparison method are directly taken from the original work and \cite{CSSsurvey}. For the best results, we use \textbf{bold} formatting.}

	\centering
	\scriptsize
	\setlength{\tabcolsep}{4.8mm}
		{\begin{tabular*}{1\textwidth} {ll|c|c|ccc|ccc}
				\toprule[0.5mm]
				&\multirow{2}*{Method} &\multirow{2}*{Year}&\multirow{2}*{Model}&
				\multicolumn{3}{c|}{15-1 (6 steps)} & \multicolumn{3}{c}{10-1 (11 steps)} \\
				&&&& 0-15 & 16-20 & all & 0-10 & 11-20 & all\\
				\midrule
                
				\multirow{6}*{\begin{sideways}Disjoint\end{sideways}} &\emph{FT}&-&DeepLabv3&0.20&1.80&0.60&6.30&1.10&3.80\\
				&MiB~\cite{MiB}&CVPR2020&DeepLabv3 &46.20&12.90&37.90&9.50&4.10&6.90 \\
				&PLOP~\cite{douillard2021plop}&CVPR2021&DeepLabv3&57.86&13.67&46.48&9.70&7.00&8.40 \\
				&SDR~\cite{SDR} &CVPR2021&DeepLabv3+ &59.40&14.30&48.70&17.30&11.00&14.30\\ 
				&RCIL~\cite{RCIL}&CVPR2022&DeepLabv3&66.10&18.20&54.70&30.60&4.70&18.20 \\
                    &Ours& &DeepLabv3&\textbf{77.66}&\textbf{40.33}&\textbf{68.77}&\textbf{70.85}&\textbf{42.13}&\textbf{57.17} \\
				\midrule
				\multirow{15}*{\begin{sideways}Overlapped\end{sideways}} &\emph{FT}&-&DeepLabv3&0.20&1.80&0.60&6.30&2.80&4.70\\
				&EWC~\cite{EWC} &PNAS2017&DeepLabv3&0.30 &4.30 &1.30&- &- &-\\
				&LwF-MC~\cite{iCaRL1}&CVPR2017&DeepLabv3 &6.40&8.40&6.90&4.65&5.90&4.95\\
				&ILT~\cite{ILT} &ICCVW2019&DeepLabv3&4.90&7.80&5.70&7.15&3.67&5.50\\
				&MiB~\cite{MiB}&CVPR2020&DeepLabv3&34.22&13.50&29.29&12.25&13.09&12.65\\
				&PLOP~\cite{douillard2021plop}&CVPR2021&DeepLabv3&65.12&21.11&54.64&44.03&15.51&30.45 \\
				&UCD+PLOP~\cite{UCD}&TPAMI2022&DeepLabv3&66.30&21.60&55.10&42.30&28.30&35.30\\
				&REMINDER~\cite{REMINDER} &CVPR2022&DeepLabv3&68.30&27.23&58.52&-&-&- \\
				&RCIL~\cite{RCIL}&CVPR2022&DeepLabv3&70.60&23.70&59.40&55.40&15.10&34.30\\
				&SPPA~\cite{SPPA}&ECCV2022&DeepLabv3&66.20&23.30&56.00&-&-&-\\
				&CAF~\cite{CAF}&TMM2022&DeepLabv3&55.70&14.10&45.30&-&-&-\\

				&AWT+MiB~\cite{AWT}&WACV2023&DeepLabv3 &59.10&17.20&49.10&33.20&18.00&26.00\\
				&EWF+MiB~\cite{EWF}&CVPR2023&DeepLabv3&78.00&25.50&65.50&56.00&16.70&37.30\\

				&GSC~\cite{GSC}&TMM2024&DeepLabv3&72.10&24.40&60.80&50.60&17.30&34.70 \\
                    &Ours& &DeepLabv3&\textbf{79.16}&\textbf{38.00}&\textbf{69.36}&\textbf{75.02}&\textbf{41.20}&\textbf{58.91} \\
				\midrule
				
				&\emph{JT} &-&DeepLabv3&79.77&72.35&77.43&78.41&76.35&77.43\\

				\bottomrule[0.5mm]
		\end{tabular*}}
  
	\label{table-VOC2012:2}
\end{table*}

\begin{algorithm}[htbp] 
\caption{CFSSeg: Closed-Form Solution for CSS}
\label{alg:cfssg_incremental}
\begin{algorithmic}[1]
\Require  Sequence of subsequent training datasets $\{\mathcal{T}_2, \dots, \mathcal{T}_T\}$, Initial classifier $\mathbf{\hat\Phi_1}$, Initial inverse auto-correlation $\mathbf{\Psi_1}$
\Ensure Final Classifier $\mathbf{\hat\Phi_T}$, Final inverse auto-correlation $\mathbf{\Psi_T}$
\For{$t = 2$ to $T$}
    \State Extract features $\mathbf{E_t}$ using Eqn (\ref{extrat features})
    \State Generate mixed labels using Eqn (\ref{2D pseudo-label}) or Eqn (\ref{3D pseudo-label})
    \State Use C-RLS algorithm:
    \State \quad  Update $ \mathbf{\Psi_{t}}$ using Eqn (\ref{eq:psi}) 
    \State \quad  Update $ \mathbf{\hat\Phi_{t}}$ using Eqn (\ref{eq:phi})
    \State Set $\mathbf{\Psi_{t-1}} \leftarrow \mathbf{\Psi_t}$
    \State Set $\mathbf{\hat\Phi_{t-1}} \leftarrow \mathbf{\hat\Phi_t}$ 
\EndFor

\State \Return $\mathbf{\hat\Phi_T}$, $\mathbf{\Psi_T}$ 

\end{algorithmic}
\end{algorithm}

\section{Experiments}
\subsection{Experimental Setup}

\noindent \textbf{2D Dataset.}
We evaluate our method using public 2D semantic segmentation benchmarks: Pascal VOC2012 \cite{Everingham10}. It contains 21 classes (including background class). This dataset features wild scenes, with 10,582 images used for training and 1,449 images for validation.

\noindent \textbf{3D Dataset.}
We evaluate our method using two  public 3D point cloud segmentation benchmarks: S3DIS \cite{s3dis} and ScanNet \cite{scannet}. These datasets are selected for their diversity, relevance to our problem domain, and ability to facilitate fair comparisons with existing benchmark methods \cite{EWC,Yang3D}. S3DIS comprises point clouds from 272 rooms across 6 indoor areas, with each point containing xyz coordinates and RGB information, manually annotated with one of 13 predefined classes. Following standard practice \cite{Yang3D}, we designate the more challenging Area 5 as the validation set, while the remaining areas are used for training. ScanNet, on the other hand, is an RGB-D video dataset featuring 1,513 scans from 707 indoor scenes. Each point is labeled with one of 21 classes, including 20 semantic classes and an additional category for unannotated places. Adhering to the standard dataset splits \cite{scannet}, we allocate 1,210 scans for training and 312 scans for validation. We adopt a sliding window \cite{qi2017pointnet++,wang2019dynamic,Yang3D} to partition the rooms in the S3DIS and ScanNet datasets, generating 7,547 and 36,350 1m×1m blocks respectively, and randomly sample 2,048 points from each block as input data. We use two sequences, $S^0$ and $S^1$, to partition the 3D dataset. $S^0$ follows the original dataset's annotation order, while $S^1$ follows the alphabetical order of the category names.

\noindent \textbf{CSS Learning Protocol.}
The classes of the images for the current step include $\mathcal{C}_{t-1} \cup \mathcal{S}_t$. In each step, we continuously introduce new classes for learning.   In an $m-n$ setting, the model first learns $m$ classes, and in each subsequent step, it incrementally learns $n$ classes. For 2D CSS, we adopt the three settings: sequential, disjoint, and overlapped. For 3D CSS, we follow \cite{Yang3D} and use the disjoint setting.

\noindent \textbf{Evaluation Metrics.} We use the widely adopted mean Intersection-over-Union (mIoU) metric to calculate the average IoU value across all classes. The IoU for a single class is computed as: $\text{IoU} = \frac{TP}{TP + FP + FN}$, where $TP$, $FP$, and $FN$ represent true positives, false positives, and false negatives respectively. To comprehensively evaluate CSS performance, we compute mIoU values separately for initial classes $ \mathcal{C}_{1} $, incremental classes $ \mathcal{C}_{T} \setminus \mathcal{C}_{1} $, and all classes $ \mathcal{C}_{T} $.

\noindent  \textbf{Comparison Methods.} Our method is an exemplar-free  method. For fairness, we also compare it with other exemplar-free methods. For 2D CSS, see Table \ref{table-VOC2012:1} and Table \ref{table-VOC2012:2} for details. As for 3D CSS, due to the limited number of relevant baselines, we consider Yang et al.'s method \cite{Yang3D} as a strong baseline because it is the only open-source SOTA method to the best of our knowledge, along with the classic baseline EWC \cite{EWC}. At the same time, we establish a naive baseline: FT, which fine-tunes both the backbone and the classification head. In addition, we include an upper bound, namely JT, which stands for joint training.

\noindent\textbf{Implementation Details.}  
For the initial  training in step 1, we adopt DeepLabv3~\cite{DeepLabv3_Chen_arXiv2017} with a ResNet-101 \cite{resnet} backbone pre-trained on ImageNet-1K.  We set the number of epochs to 50 and the batch size to 32. We use SGD as the optimizer with a learning rate of \(10^{-2}\), a momentum of 0.9, and a weight decay of \(10^{-4}\), combined with a polynomial learning rate scheduler. The loss function is binary cross-entropy (BCE). For the 3D CSS encoder, we employ DGCNN~\cite{DGCNN} with a batch size of 32 and the Adam optimizer, using an initial learning rate of 0.001 and a weight decay of 0.0001 for 100 epochs. In the continual learning step,
we freeze the encoder and insert an RHL layer. In the 2D experiments, we set \(d_\mathrm{E}\) to 8192, \(\gamma\) to 1, and $\tau$ to 0.4. In the 3D experiments, we set \(d_\mathrm{E}\) to 5000, \(\gamma\) to 1, and $\tau$ to 0.0035 and 0.001 on the S3DIS and ScanNet datasets via cross-validation respectively.

\subsection{Main Results}  

\noindent\textbf{2D Experimental Results.}  
Extensive experiments on the Pascal VOC2012 dataset demonstrate the outstanding performance of our method across all evaluation settings (Tables \ref{table-VOC2012:1} and \ref{table-VOC2012:2}). Under the sequential 15-1 configuration, our approach achieves an overall mIoU of 70.0\%, significantly surpassing current state-of-the-art methods. The method maintains strong performance on base classes (78.1\% mIoU) while retaining high accuracy for novel classes (42.0\% mIoU), effectively addressing the inherent stability-plasticity dilemma in continual learning. Notably, identical performance metrics in both the 15-1 and 15-5 sequential settings confirm the mathematical consistency of our closed-form solution. The method's advantages become even more pronounced in challenging scenarios: under the disjoint 15-1 setting, it achieves 68.77\% mIoU across all classes (77.66\% for base classes, 40.33\% for novel classes), while in the disjoint 10-1 setting, the performance gap widens dramatically (57.17\% vs. 18.20\% for the next best competitor). Similarly, in overlapped scenarios, our method maintains its superiority with 69.36\% mIoU in the 15-1 configuration and 58.91\% in the 10-1 setting, outperforming existing approaches across all class categories. These compelling results validate the theoretical advantages of the closed-form solution in mitigating catastrophic forgetting while efficiently integrating new knowledge.  

\noindent\textbf{3D Experimental Results.}  
Our method demonstrates exceptional performance in 3D point cloud segmentation tasks across multiple benchmark datasets. On the S3DIS dataset with the $S^0$ split, it outperforms existing approaches in all evaluation protocols: the 8-1 configuration achieves an overall mIoU of 41.66\% (49.77\% for base classes, 28.69\% for novel classes), while the 10-1 and 12-1 configurations reach 42.67\% and 44.08\% mIoU, respectively. With the $S^1$ split, the 8-1 configuration improves further to 43.40\% mIoU (51.33\% for base classes, 30.72\% for novel classes), setting a new state-of-the-art benchmark for 3D continual semantic segmentation. The method also proves effective on the more challenging ScanNet dataset: under the $S^0$ split, the 15-1 configuration achieves 26.97\% mIoU, significantly outperforming previous methods that struggled to exceed 10\% mIoU. The 17-1 and 19-1 configurations yield overall mIoU scores of 27.98\% and 27.96\%, respectively. On the $S^1$ split, the three configurations achieve 25.18\%, 26.42\%, and 27.84\% mIoU. The consistent performance across diverse 3D configurations underscores the versatility and robustness of our closed-form solution in the point cloud domain, providing a theoretically grounded and practical solution for real-world 2D and 3D semantic segmentation applications.

\noindent\textbf{Class Order Robustness.}
Our method demonstrates strong robustness to class order variations, which is a critical aspect in continual learning scenarios. This robustness stems from two key factors: First, the closed-form solution ensures deterministic and unique classification head weights for a given set of training data. As evidenced in Table \ref{table-VOC2012:1}, our method achieves identical performance in both the sequential 15-1 and 15-5 settings. This consistency is a direct consequence of the closed-form nature of our solution, which guarantees the same optimal weights regardless of the training sequence. Second, for 3D datasets, we observe that performance variations across different class orders are primarily influenced by the backbone network's feature extraction capabilities. When using the same backbone architecture, the performance remains remarkably stable across different class sequences. Minor variations in performance can be attributed to the backbone's sensitivity to different class orders during the initial training phase, rather than the continual learning mechanism itself. This robustness to class order variations is particularly valuable in real-world applications where the sequence of class introduction cannot be predetermined. Our method's ability to maintain consistent performance regardless of the learning order makes it more practical for deployment in dynamic environments.

\subsection{Ablation Studies}
To rigorously evaluate the contributions of each component in our framework, we conducted comprehensive ablation experiments using the challenging Pascal VOC2012 overlapped 10-1 setting. The results presented in Table \ref{tab:ablation} reveal several key insights into the effectiveness of our approach.

\noindent\textbf{Effect of RHL.}
First, it is evident from the table that removing the RHL component led to a significant performance drop, particularly on new classes (mIoU decreased from 41.20\% to 9.36\%), while the overall performance also sharply declined from 58.91\% to 37.94\%. This result underscores the critical role of RHL in enhancing the model's plasticity while maintaining stability for previously learned classes. By mapping features into a higher-dimensional space, RHL makes them more linearly separable, thereby improving the model's ability to learn new categories.

\noindent\textbf{Effect of Pseudo-Labeling.}
Second, our analysis demonstrates that the pseudo-labeling mechanism plays a vital role in preventing semantic drift, which is particularly evident in incremental learning scenarios with both disjoint and overlapped classes. Without the pseudo-labeling mechanism, the model's performance on old classes dropped from 75.02\% to 71.83\%, and on new classes from 41.20\% to 36.19\%, with the overall performance decreasing to 54.86\%. This indicates that pseudo-labeling effectively mitigates representation shifts, helping the model retain accurate recognition capabilities for previously learned classes.  

These experimental results empirically validate our theoretical framework, confirming that each architectural component addresses specific challenges in continual semantic segmentation, collectively contributing to the robust performance observed in our comprehensive experiments.

\begin{table}[htbp]
    \centering
    \caption{"Ours" represents the complete model setting; "w/o RHL" means the model excludes RHL; "w/o Pseudo-Labeling" indicates that the model does not use the pseudo-labeling.}
    \begin{tabular}{lccc}
        \toprule
        Settings & 0 - 10 & 11 - 20 & all \\
        \midrule
        Ours & \textbf{75.02} & \textbf{41.20} & \textbf{58.91} \\
        w/o RHL & 63.91 & 9.36 &  37.94 \\
        w/o Pseudo-Labeling & 71.83 & 36.19 & 54.86 \\
        \bottomrule
    \end{tabular}
    \label{tab:ablation}
\end{table}

\subsection{Efficiency Studies}
As analyzed in Sec \ref{efficiency}, our method demonstrates remarkable computational efficiency advantages over traditional gradient-based optimization approaches. Updating the classification head parameters only requires matrix multiplication and matrix inversion, which can be efficiently parallelized on GPU. As quantified in Table \ref{tab:performance_comparison}, our closed-form solution achieves convergence in just a single training epoch, taking only 43.25 seconds, whereas fine-tuning methods requiring multiple gradient descent iterations need 651.46 seconds. This represents a 15× acceleration in training time while maintaining superior segmentation performance.  Furthermore, despite supporting larger batch sizes (64 vs. 32), our method significantly reduces memory consumption (51.61 GB vs. 59.55 GB) due to the closed-form nature of our solution, which eliminates the need to store historical data or intermediate gradient states. These efficiency metrics highlight the practical advantages of our approach for resource-constrained deployment scenarios and time-sensitive applications.

\begin{table}[htbp]
\caption{Comparison of Training Time and GPU Memory, with fine-tuning for 10 epochs on the Pascal VOC2012 dataset.}

\centering
    \resizebox{\linewidth}{!}{
\begin{tabular}{lcccc}
\toprule
\textbf{Method}     & \textbf{One epoch Time} & \textbf{Total Time} & \textbf{GPU Memory} & \textbf{Batch Size} \\
\midrule
FT          & 64.39 s                  & 651.46 s                           & 59.55 GB                     & 32                      \\
Ours & 43.25 s                 & 43.25 s                             & 51.61 GB                     & 64                      \\
\bottomrule
\end{tabular}}
\label{tab:performance_comparison}
\end{table}
\section{Discussion}
\subsection{Scalability Challenges in Large Datasets}

Although our method demonstrates excellent performance across the tested datasets, its scalability to extremely large datasets may face challenges. The method relies on a pre-trained encoder that is frozen after the initial step. For very complex or diverse data distributions, this frozen encoder might not capture sufficiently rich features to support effective classification across all future tasks.  Nevertheless, this limitation could be mitigated by leveraging more robust pre-trained backbone networks, such as SAM \cite{kirillov2023segment}, which have been pre-trained on large-scale  datasets.

\subsection{Application to Real-Time Systems}
Our method is particularly well-suited for real-time systems and edge computing scenarios due to several key advantages. First, its closed-form solution approach requires only a single update per step, significantly reducing computational demands compared to gradient-based methods that require multiple epochs, as demonstrated in our experiments. Second, the minimal memory footprint of the method, which eliminates the need to store historical data, makes it ideal for resource-constrained devices. Third, the deterministic nature of the closed-form solution ensures consistent performance, avoiding the variability inherent in stochastic gradient-based approaches. These characteristics make our method an excellent choice for applications such as autonomous vehicles, where continuous learning from streaming data must operate under strict latency and resource constraints.

\section{Conclusion}
We presented CFSSeg, a novel method for class-incremental semantic segmentation (CSS) designed for both 2D images and 3D point clouds. CFSSeg distinguishes itself through a gradient-free, closed-form update mechanism, computed recursively for efficiency. This core component, combined with a frozen encoder for stability, high-dimensional feature mapping for plasticity, and a tailored pseudo-labeling strategy for semantic drift, allows the model to learn new classes incrementally without catastrophic forgetting or reliance on stored exemplars. Consequently, CFSSeg operates with significantly reduced computational cost—requiring only a single training pass per step—and enhanced data privacy. Our extensive evaluations on Pascal VOC2012, S3DIS, and ScanNet show that CFSSeg achieves outstanding results, outperforming prior methods and providing a robust, efficient, and effective solution for continual semantic segmentation tasks, forming a complete closed loop from theoretical foundation to practical implementation.


\begingroup
\small
\bibliographystyle{IEEEtran}
\bibliography{icme2025references}
\endgroup

\onecolumn

\appendices
\section{Proof of Theorem \ref{theorem:main}}
\label{app:proof_of_the_theorem}
\begin{proof}

At step $t-1$, we have
\begin{align}
\label{eq_w_k_1}
\mathbf{\hat \Phi}_{t-1} = (\mathbf{E}_{1:t-2}^{\top} \mathbf{E}_{1:t-2}+\mathbf{E}_{t-1}^{\top}\mathbf{E}_{t-1}+\gamma \mathbf{I})^{-1} \begin{bmatrix} \mathbf E_{1:t-1}^{\top} \mathbf Y_{1:t-2}^{\text{train}}+\mathbf E_{t-1}^{\top}\mathbf {\bar Y}_{t-1}^{\text{train}} & \mathbf E_{t-1}^{\top} \mathbf {\tilde{Y}}_{t-1}^{\text{train}}
\end{bmatrix}.
\end{align}

Hence, at step $t$, we have
\begin{align}
\label{eq_w_k}
\mathbf{\hat \Phi}_{t} = (\mathbf{E}_{1:t-1}^{\top} \mathbf{E}_{1:t-1}  +\mathbf{E}_{t}^{\top}\mathbf{E}_{t}+\gamma \mathbf{I})^{-1} \begin{bmatrix} \mathbf E_{1:t-1}^{\top} \mathbf Y_{1:t-1}^{\text{train}}+\mathbf E_{t}^{\top}\mathbf {\bar Y}_{t}^{\text{train}} & \mathbf E_{t}^{\top} \mathbf {\tilde{Y}}_{t}^{\text{train}} 
\end{bmatrix}.
\end{align}

We have defined the autocorrelation memory matrix $\mathbf{\Psi}_{t-1}$ in the paper via
\begin{align}\label{eq_r_m_k_1}
    \mathbf{\Psi}_{t-1} = (\mathbf{E}_{1:t-2}^{\top} \mathbf{E}_{1:t-2}   + \mathbf{E}_{t-1}^{\top}\mathbf{E}_{t-1}+\gamma \mathbf{I})^{-1}.
\end{align}

To facilitate subsequent calculations, here we also define a cross-correlation matrix $\mathbf{Q}_{t-1}$, i.e., 
\begin{align}\label{eq_Q_k_1}
    \mathbf{Q}_{t-1} = \begin{bmatrix} \mathbf E_{1:t-1}^{\top} \mathbf Y_{1:t-2}^{\text{train}}+\mathbf E_{t-1}^{\top}\mathbf {\bar Y}_{t-1}^{\text{train}} & \mathbf E_{t-1}^{\top}  \mathbf {\tilde{Y}}_{t}^{\text{train}}
\end{bmatrix}.
\end{align}

Thus we can rewrite Eqn \eqref{eq_w_k_1} as
\begin{align}\label{xx}
    \hspace{4pt}\mathbf{\hat \Phi}_{t-1} = \mathbf{\Psi}_{t-1}\mathbf{Q}_{t-1}.
\end{align}

Therefore, at step ${t}$ we have
\begin{align}\label{compact}
    \mathbf{\hat \Phi}_{t} = \mathbf{\Psi}_{t}\mathbf{Q}_{t}.
\end{align}

From Eqn \eqref{eq_r_m_k_1}, we can recursively calculate $\mathbf{\Psi}_{t}$ from $\mathbf{\Psi}_{t-1}$, i.e.,
\begin{align}\label{eq_r_m_k}
    \mathbf{\Psi}_{t} = 
    (\mathbf{\Psi}_{t-1}^{-1} + \mathbf{E}_{t}^{\top}\mathbf{E}_{t})^{-1}.        
\end{align}
    
According to the Woodbury matrix identity, we have
\begin{align*}
    (\mathbf{A} + \mathbf{U}\mathbf{C}\mathbf{V})^{-1} = \mathbf{A}^{-1} - \mathbf{A}^{-1}\mathbf{U}(\mathbf{C}^{-1} + \mathbf{V}\mathbf{A}^{-1}\mathbf{U})^{-1}\mathbf{V}\mathbf{A}^{-1}.
\end{align*}
  
Let $\mathbf{A} = \mathbf{\Psi}_{t-1}^{-1}$, $\mathbf{U} = \mathbf{E}_{t}^{\top}$, $\mathbf{C} = \mathbf{I}$, and $\mathbf{V} = \mathbf{E}_{t}$ in Eqn \eqref{eq_r_m_k}, we have
\begin{align}\label{eq_R_update1}
    \mathbf{\Psi}_{t} = \mathbf{\Psi}_{t-1} - \mathbf{\Psi}_{t-1}\mathbf{E}_{t}^{\top}(\mathbf{I} + \mathbf{E}_{t}\mathbf{\Psi}_{t-1}\mathbf{E}_{t}^{\top})^{-1}\mathbf{E}_{t}\mathbf{\Psi}_{t-1}.
\end{align}
  
Hence, $\mathbf{\Psi}_{t}$ can be recursively updated using its last-phase counterpart $\mathbf{\Psi}_{t-1}$ and data from the current phase (i.e., $\mathbf{E}_{t}$). This proves the recursive calculation of the inverted auto-correlation matrix.

Next, we derive the recursive formulation of $\mathbf{\hat \Phi}_{t}$. To this end, we also recurse the cross-correlation matrix $\mathbf{Q}_{t}$ at step $t$, i.e.,
\begin{align}\label{eq_R_update3}
\mathbf{Q}_{t} = \begin{bmatrix} \mathbf E_{1:t-1}^{\top} \mathbf Y_{1:t-1}^{\text{train}}+\mathbf E_{t}^{\top}\mathbf {\bar Y}_{t}^{\text{train}} & \mathbf E_{t}^{\top} \mathbf {\tilde{Y}}_{t}^{\text{train}} \end{bmatrix} = \mathbf{Q}_{t-1}^{\prime}+ \begin{bmatrix}
    \mathbf{E}_{t}^{\top}\mathbf {\bar Y}_{t}^{\text{train}} & \mathbf {E}_{t}^{\top} \mathbf{\tilde{Y}}_{t}^{\text{train}}
\end{bmatrix},
\end{align}
where
\begin{align}\label{Qt-1}
\mathbf{Q}_{t-1}^{\prime} & = \begin{cases}
\begin{bmatrix}
    \mathbf{Q}_{t-1} & \mathbf{0}_{d_{\text{E}}\times (d_{C_{t}}-d_{C_{t-1}})}\end{bmatrix}, &d_{C_{t}} > d_{C_{t-1}} \\
    \mathbf{Q}_{t-1}, &d_{C_{t}} = d_{C_{t-1}} 
\end{cases} .
\end{align}

Note that the concatenation in Eqn \eqref{Qt-1} is due to the assumption that $\mathbf{Y}_{1:t}^{\text{train}}$ at step $t$ contains more data classes (hence more columns) than $\mathbf{Y}_{1:t-1}^{\text{train}}$. It is possible that there are no new classes appear at step $t$, then $ \mathbf {\tilde{Y}}_{t}^{\text{train}}$ should be $\mathbf{0}$.

Similar to what Eqn \eqref{Qt-1} does,
\begin{align}
\mathbf{\hat \Phi}_{(t-1)\prime} & = \begin{cases}
\begin{bmatrix}\mathbf{\hat \Phi}_{t-1} & \mathbf{0}_{d_{\text{E}}\times (d_{C_{t}}-d_{C_{t-1}})}\end{bmatrix}, & d_{C_{t}} > d_{C_{t-1}} \\
 \mathbf{\hat \Phi}_{t-1}, & d_{C_{t}} = d_{C_{t-1}} 
\end{cases}
\end{align}

We have
\begin{align}\label{wfcn}
    \mathbf{\hat \Phi}_{(t-1)\prime} = \mathbf{\Psi}_{t-1}\mathbf{Q}_{t-1}^{\prime}.
\end{align}
Hence, $\mathbf{\hat \Phi}_{t}$ can be rewritten as
\begin{align}\nonumber
    \mathbf{\hat \Phi}_{t} &= \mathbf{\Psi}_{t}\mathbf{Q}_{t} \\ \nonumber
    &=    \mathbf{\Psi}_{t}(\mathbf{Q}_{t-1}^{\prime} + \begin{bmatrix}
    \mathbf{E}_{t}^{\top}\mathbf {\bar Y}_{t}^{\text{train}} & \mathbf E_{t}^{\top} \mathbf {\tilde{Y}}_{t}^{\text{train}} 
\end{bmatrix})\\ \label{eq_W_k_33}
    &=\mathbf{\Psi}_{t}\mathbf{Q}_{t-1}^{\prime} + \mathbf{\Psi}_{t}\mathbf{E}_{t}^{\top}\begin{bmatrix}
    \mathbf {\bar Y}_{t}^{\text{train}} & \mathbf {\tilde{Y}}_{t}^{\text{train}} 
\end{bmatrix}.
\end{align}

By substituting Eqn \eqref{eq_R_update1} into $\mathbf{\Psi}_{t}\mathbf{Q}_{t-1}^{\prime}$, we have
\begin{align}\nonumber
\mathbf{\Psi}_{t}\mathbf{Q}_{t-1}^{\prime} &=  \mathbf{\Psi}_{t-1}\mathbf{Q}_{t-1}^{\prime} - \mathbf{\Psi}_{t-1}\mathbf{E}_{t}^{\top}(\mathbf{I} + \mathbf{E}_{t}\mathbf{\Psi}_{t-1}\mathbf{E}_{t}^{\top})^{-1}\mathbf{E}_{t}\mathbf{\Psi}_{t-1}\mathbf{Q}_{t-1}^{\prime}\\ \label{eq_W_k_2}
&=\mathbf{\hat \Phi}_{(t-1)\prime}- \mathbf{\Psi}_{t-1}\mathbf{E}_{t}^{\top}(\mathbf{I} + \mathbf{E}_{t}\mathbf{\Psi}_{t-1}\mathbf{E}_{t}^{\top})^{-1}\mathbf{E}_{t}\mathbf{\hat \Phi}_{(t-1)\prime}.
\end{align}

To simplify this equation, let $\mathbf{K}_{t} = (\mathbf{I} + \mathbf{E}_{t}\mathbf{\Psi}_{t-1}\mathbf{E}_{t}^{\top})^{-1}$. Since 
\begin{align*}
    \mathbf{I} = \mathbf{K}_{t}\mathbf{K}_{t}^{-1} = \mathbf{K}_{t}(\mathbf{I} + \mathbf{E}_{t}\mathbf{\Psi}_{t-1}\mathbf{E}_{t}^{\top}),
\end{align*}
we have
$\mathbf{K}_{t} = \mathbf{I} - \mathbf{K}_{t} \mathbf{E}_{t}\mathbf{\Psi}_{t-1}\mathbf{E}_{t}^{\top}$.
Therefore, 
\begin{align}\label{a}
    &\mathbf{\Psi}_{t-1}\mathbf{E}_{t}^{\top}(\mathbf{I} + \mathbf{E}_{t}\mathbf{\Psi}_{t-1}\mathbf{E}_{t}^{\top})^{-1} \notag \\
    &= \mathbf{\Psi}_{t-1}\mathbf{E}_{t}^{\top}\mathbf{K}_{t}\notag \\
    &=\mathbf{\Psi}_{t-1}\mathbf{E}_{t}^{\top}(\mathbf{I} - \mathbf{K}_{t} \mathbf{E}_{t}\mathbf{\Psi}_{t-1}\mathbf{E}_{t}^{\top})\notag \\
    &=(\mathbf{\Psi}_{t-1} - \mathbf{\Psi}_{t-1}\mathbf{E}_{t}^{\top}\mathbf{K}_{t} \mathbf{E}_{t}\mathbf{\Psi}_{t-1})\mathbf{E}_{t}^{\top} \notag \\
    &= \mathbf{\Psi}_{t}\mathbf{E}_{t}^{\top}.
\end{align}

Substituting Eqn \eqref{a} into Eqn \eqref{eq_W_k_2}, $\mathbf{\Psi}_{t}\mathbf{Q}_{t-1}^{\prime}$ can be written as
\begin{align}\label{eq_RQprime}
    \mathbf{\Psi}_{t}\mathbf{Q}_{t-1}^{\prime} = \mathbf{\hat \Phi}_{(t-1)\prime}- \mathbf{\Psi}_{t}\mathbf{E}_{t}^{\top}\mathbf{E}_{t}\mathbf{\hat \Psi}_{(t-1)\prime}.
\end{align}

Substituting Eqn \eqref{eq_RQprime} into Eqn \eqref{eq_W_k_33} implies that
\begin{align}\nonumber
    \mathbf{\hat \Phi}_{t} &= 
    \mathbf{\hat \Phi}_{(t-1)\prime}- \mathbf{\Psi}_{t}\mathbf{E}_{t}^{\top}\mathbf{E}_{t}\mathbf{\hat \Phi}_{(t-1)\prime}+\mathbf{\Psi}_{t}\mathbf{E}_{t}^{\top}\begin{bmatrix}
    \mathbf {\bar Y}_{t}^{\text{train}} & \mathbf {\tilde{Y}}_{t}^{\text{train}} \end{bmatrix}
    \\
   &  = \begin{bmatrix}
  \mathbf{\hat \Phi}_{t-1}- \mathbf{\Psi}_{t}\mathbf{E}_{t}^{\top}\mathbf{E}_{t}\mathbf{\hat \Phi}_{t-1}+\mathbf{\Psi}_{t}\mathbf E_{t}^{\top}\mathbf {\bar Y}_{t}^{\text{train}} &
\mathbf{\Psi}_{t}\mathbf E_{t}^{\top} \mathbf {\tilde{Y}}_{t}^{\text{train}} \end{bmatrix}
\end{align}
which completes the proof.

\end{proof}

\end{document}